\newcommand\eqdef{\overset{\mathrm{def}}{=}}
\pgfplotsset{compat=1.14}
\def\x{{\mathbf x}}
\def\eye{{\mathbf I}}
\def\y{{\mathbf y}}
\def\t{{\mathbf t}}
\def\GP{{\mathcal{GP}}}
\def\cN{{\mathcal N}}
\def\N{{\mathbb N}}
\def\dxi{{\text d} \xi}
\def\R{{\mathbb R}}
\def\sinc{ \operatorname{sinc} }
\def\xizi{ \xi_0^{(i)} }
\newcommand{\rect}[1]{\text{rect}\left(#1\right)}
\newcommand{\tri}[1]{\operatorname{tri}\left(#1\right)}
\newcommand{\SK}[1]{\operatorname{SK}\left(#1\right)}
\newcommand{\GSK}[1]{\operatorname{GSK}\left(#1\right)}
\newcommand{\GSKN}[1]{\operatorname{GSK}_N\left(#1\right)}
\newcommand{\sr}[3]{\operatorname{simrect}_{#1,#2}\left(#3\right)}
\newcommand{\srp}[4]{\operatorname{simrect}_{#1,#2,#3}\left(#4\right)}
\newcommand{\fourier}[1]{\mathcal{F} \left\{#1\right\}}
\newcommand{\afourier}[1]{\mathcal{F}^{-1} \left\{#1\right\}}
\newcommand{\E}[1]{\mathbb E \left[#1\right]}
\newcommand{\V}[1]{\mathbb V \left[#1\right]}
\newtheorem{theorem}{Theorem}
\newtheorem{remark}[theorem]{Remark}
\newtheorem{definition}[theorem]{Definition}
\newtheorem{proposition}[theorem]{Proposition}
\title{Band-Limited Gaussian Processes: \\The Sinc Kernel}
\author{
  Felipe Tobar\\
  Center for Mathematical Modeling\\
  Universidad de Chile\\
  \texttt{ftobar@dim.uchile.cl}
}
\begin{document}

\maketitle


\begin{abstract}
  We propose a novel class of Gaussian processes (GPs) whose spectra have compact support, meaning that their sample trajectories are almost-surely band limited. As a complement to the growing literature on spectral design of covariance kernels, the core of our proposal is to model power spectral densities through a rectangular function, which results in a kernel based on the sinc function with straightforward extensions to non-centred (around zero frequency) and frequency-varying cases. In addition to its use in regression, the relationship between the sinc kernel and the classic theory is illuminated, in particular, the Shannon-Nyquist theorem is interpreted as posterior reconstruction under the proposed kernel. Additionally, we show that the sinc kernel is instrumental in two fundamental signal processing applications: first, in stereo amplitude modulation, where the non-centred sinc kernel arises naturally. Second, for band-pass filtering, where the proposed kernel allows for a Bayesian treatment that is robust to observation noise and missing data. The developed theory is complemented with illustrative graphic examples and validated experimentally using real-world data.    
\end{abstract}
 
\section{Introduction}
\label{sec:intro}

\subsection{Spectral representation and Gaussian processes}

The spectral representation of time series is both meaningful and practical in a plethora of scientific domains. From seismology to medical imagining, and from astronomy to audio processing, understanding which fraction of the energy in a time series is contained on a specific frequency band is key for, e.g., detecting critical events, reconstruction, and denoising. The literature on spectral estimation \cite{kay:88,stoica2005spectral} enjoys of a long-standing reputation with proven success in real-world applications in discrete-time signal processing and related fields. For unevenly-sampled noise-corrupted observations, Bayesian approaches to spectral representation emerged in the late 1980s and early 1990s \cite{bretthorst2013bayesian,jaynes1987bayesian,gregory_revolution}, thus reformulating spectral analysis as an inference problem which benefits from the machinery of Bayesian probability theory \cite{jaynes2003}. 

In parallel to the advances of spectral analysis, the interface between probability, statistics and machine learning (ML) witnessed the development of Gaussian processes (GP, \cite{Rasmussen:2006}), a nonparametric generative model for time series with unparalleled modelling abilities and unique conjugacy properties for Bayesian inference. GPs are the \emph{de facto} model in the ML community to learn (continuous-time) time series in the presence of unevenly-sampled observations corrupted by noise. Recent GP models rely on Bochner theorem \cite{salomon}, which indicates that the covariance kernel and power spectral density (PSD) of a stationary stochastic process are Fourier pairs, to construct kernels by direct parametrisation of PSDs to then express the kernel via the inverse Fourier transform. The precursor of this concept in ML is the spectral-mixture kernel (SM, \cite{Wilson:2013}), which models PSDs as Gaussian RBFs, and its multivariate extensions \cite{CSM,parra_tobar}. Accordingly, spectral-based sparse GP approximations \cite{lazaro2010sparse,hensman2016variational,gal2015improving,NIPS2009_interdomain} also provide improved computational efficiency. 

\subsection{Contribution and organisation}

A fundamental object across the signal processing toolkit is the normalised sinc function, defined by 
\begin{equation}
	\sinc(x) = \frac{\sin \pi x}{\pi x}.
\end{equation}
Its importance stems from its role as the optimal basis for reconstruction (in the Shannon-Whittaker sense \cite{Whittaker_1915}) and the fact that its Fourier transform is the rectangle function, which has compact support. Our hypothesis is that the symbiosis between spectral estimation and GPs can greatly benefit from the properties of kernels inspired the sinc function, yet this has not been studied in the context of GPs. In a nutshell we propose to parametrise the PSD by a (non-centred) rectangular function, thus yielding kernels defined by a sinc function times a cosine, resembling the SM kernel \cite{Wilson:2013} with the main distinction that the proposed PSD has compact, rather than infinite, support. 

The next section introduces the proposed sinc kernel, its centred/non-centred/frequency-varying versions as well as its connections to sum-of-infinite-sinusoids models. Section \ref{sec:Nyquist} interprets posterior reconstruction using the sinc kernel from the Shannon-Nyquist perspective. Then, Sections \ref{sec:am} and \ref{sec:bpf} revise the role of the sinc kernel in two signal processing applications: stereo demodulation and band-pass filtering. Lastly, Section \ref{sec:exp} validates the proposed kernel through numerical experiments with real-world signals and Section \ref{sec:disc} presents the future research steps and main conclusions.
 
\section{Compact spectral support via the sinc kernel}
\label{sec:model}

The Bochner theorem \cite{salomon} establishes the connection between a (stationary) positive definite kernel $K$ and a density $S$ via the Fourier transform $\fourier{\cdot}$, that is, 
\begin{equation}
	K(t) = \afourier{S(\xi)}(t),
	\label{eq:construction}
\end{equation}
where the function $S:\R^n\mapsto\R_+$ is Lebesgue integrable. This result allows us to design a valid positive definite function $K$ by simply choosing a positive function $S$ (a much easier task), to then \emph{anti-Fourier transform it} according to eq.~\eqref{eq:construction}. This is of particular importance in GPs, where we can identify $K$ as the covariance kernel and $S$ the spectral density, therefore, the design of the GP can be performed in the spectral domain rather than the temporal/spatial one. Though temporal construction is the classical alternative, spectral-based approaches to covariance design have become popular for both scalar and vector-valued processes \cite{Wilson:2013,CSM,parra_tobar}, and even for nonstationary \cite{NIPS2017_7050} and nonparametric \cite{tobar:nonparametric,npr_15b} cases.

We next focus on GPs that are \emph{bandlimited}, or in other words, that have a spectral density with compact support based on the sinc kernel.

\subsection{Construction from the inverse Fourier transform of rectangular spectrum}

Let us denote the rectangular function given by 
\begin{equation}
	\rect{\xi} \eqdef 
	 \begin{cases} 
      	1 & |\xi|< 1/2 \\
      	1/2 & |\xi|= 1/2 \\
      	0 & \text{elsewhere,}
   \end{cases}
\end{equation}
and consider a GP with a power spectral density (PSD), denoted by $S$, given by the sum of two rectangular functions placed symmetrically\footnote{We consider PSDs that are symmetric wrt the origin since we focus on the real-valued GPs. Nevertheless, the presented theory can be readily extended to non-symmetric PSDs that would give rise to complex-valued covariances and thus complex-valued GP trajectories \cite{8307269,icassp15}} wrt the origin at $\xi_0$ and $-\xi_0$, with widths equal to $\Delta$ and total power equal to $\sigma^2$. We refer to this construction as the \textbf{symmetric rectangle} function with centre $\xi_0$,  width $\Delta$ and power $\sigma^2$ denoted by  
\begin{equation} 
	\srp{\xi_0}{\Delta}{\sigma^2}{\xi} \eqdef \frac{\sigma^2}{2\Delta}\left(\rect{\frac{\xi-\xi_0}{\Delta}}+\rect{\frac{\xi+\xi_0}{\Delta}}\right),
	\label{eq:simrect}
\end{equation}
where the denominator $2\Delta$ ensures that the function integrates $\sigma^2$ and the explicit dependence on $\xi_0, \Delta, \sigma^2$ will be shown only when required. We assume $\Delta>0$; $\xi_0, \sigma^2\geq0$, and note that the rectangles are allowed to overlap if $\Delta>2\xi_0$. 

We can then calculate the kernel associated with the PSD given by $S(\xi)=\srp{\xi_0}{\Delta}{\sigma^2}{\xi}$ using the standard properties of the Fourier transform, in particular, by identifying the symmetric rectangle function in eq.~\eqref{eq:simrect} as a convolution between a (centred) rectangle and two Dirac delta functions on $\{\xi_0,-\xi_0\}$. We define this kernel as follows.

\begin{definition}[The Sinc Kernel]
The stationary covariance kernel resulting from the inverse Fourier transform of the symmetric rectangle function in eq.~\eqref{eq:simrect} given by  
\begin{equation}
	\SK{t} \eqdef \sigma^2\sinc(\Delta t)\cos(2\pi\xi_0t) 
	\label{eq:sinc_k}
\end{equation}
is referred to as \textbf{the sinc kernel} of frequency $\xi_0\geq0$, bandwidth $\Delta\geq0$ and magnitude $\sigma^2\geq0$. The expression $\sinc(t)=\frac{\sin \pi t}{\pi t}$ is known as the the normalised sinc function, and when $\xi_0=0$ we refer to the above expression as the \textbf{centred sinc kernel}. 
\end{definition}

Being positive definite by construction, the sinc kernel can be used within a GP for training, inference and prediction. Thus, we implemented a GP with the sinc kernel (henceforth \textbf{GP-sinc}) for the interpolation/extrapolation of a heart-rate time series from the MIT-BIH database \cite{Goldberger}. Using one third of the data, training the GP-sinc (plus noise variance) was achieved by maximum likelihood, were both the BFGS \cite{wright1999numerical} and Powell \cite{powell1964efficient} optimisers yielded similar results. Fig.~\ref{fig:sinc_demo} shows the leant PSD and kernel alongside the periodogram for comparison, and a sample path for temporal reconstruction and forecasting. We highlight that the sinc function implemented in Python used in this optimisation was numerically stable for both optimisers and multiple initial conditioned considered.

\begin{figure}[ht] 
	\centering
	\includegraphics[width=0.9\linewidth]{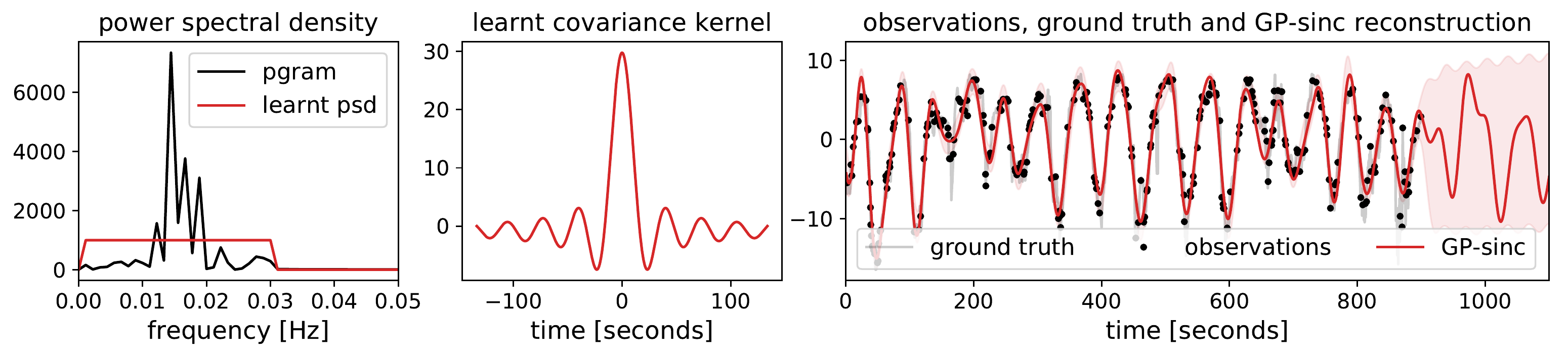}
	\caption{Implementation of the sinc kernel on a heart-rate time series. Notice that (i) the learnt kernel shares the same support as the periodogram, (ii) the error bars in the reconstruction are tight, and (iii) the harmonic content in the forecasting part is consistent with the ground truth. }
	\label{fig:sinc_demo}
\end{figure}

\subsection{Construction from a mixture of infinite sinusoids}

Constructing kernels for GP models as a sum of infinite components is known to aid the interpretation of its hyperparameters \cite{Rasmussen:2006}. For the sinc kernel, let us consider an infinite sum of sines and cosines with frequencies uniformly drawn between $\xi_0-\tfrac{\Delta}{2}$ and $\xi_0+\tfrac{\Delta}{2}$, and with random magnitudes given i.i.d. according to $\alpha(\xi),\beta(\xi)\sim\cN(0,\sigma^2)$. That is,  
\begin{equation}
	f(t) = \int_{\xi_0-\tfrac{\Delta}{2}}^{\xi_0+\tfrac{\Delta}{2}} \alpha(\xi)\sin(2\pi\xi t)+\beta(\xi)\cos(2\pi\xi t)\dxi.
	\label{eq:inf_sum}
\end{equation}

The kernel corresponding to this zero-mean GP can be calculated using basic properties of the Fourier transform, trigonometric identities and the independence of the components magnitudes. This kernel is stationary and given by the sinc kernel defined in eq.~\eqref{eq:sinc_k}:
\begin{align}
	K(t,t') = \E{f(t)f(t')} = \sigma^2\sinc((t-t')\Delta)\cos(2\pi\xi_0(t-t')) = \SK{t-t'}.
\end{align}

The interpretation of this construction is that the paths of a {GP-sinc} can be understood as having frequency components that are equally present in the range between $\xi_0-\tfrac{\Delta}{2}$ and $\xi_0+\tfrac{\Delta}{2}$. On the contrary, frequency components outside this range have zero probability to appear in the GP-sinc sample paths. In this sense, we say that the sample trajectories of a GP with sinc kernel are \emph{almost surely band-limited}, where the \emph{band} is given by $\left[\xi_0-\tfrac{\Delta}{2},\xi_0+\tfrac{\Delta}{2}\right]$.

\subsection{Frequency-varying spectrum}
\label{sec:freq_var}
The proposed sinc kernel only caters for PSDs that are constant in their (compact) support due to the rectangular model. We extend this construction to band-limited processes with a PSD that is a non-constant function of the frequency. This is equivalent to modelling the PSD as 
\begin{equation}
 	S(\xi) = \srp{\xi_0}{\Delta}{\sigma^2}{\xi}\Gamma(\xi),
 	\label{eq:gen_sym_rect}
 \end{equation} 
 where the symmetric rectangle gives the support to the PSD and the function $\Gamma$ controls the frequency dependency. Notice that the only relevant part of $\Gamma$ is that in the support of $\srp{\xi_0}{\Delta}{\sigma^2}{\cdot}$, furthermore, we assume that $\Gamma$ is non-negative, symmetric and {continuous almost everywhere} (the need for this will be clear shortly). 

 From eq.~\eqref{eq:gen_sym_rect}, the proposed sinc kernel can be generalised for the frequency-varying case as
 \begin{equation}
 	\GSK{t} \eqdef \afourier{\srp{\xi_0}{\Delta}{\sigma^2}{\xi} \Gamma(\xi)} = \SK{t}\star K_\Gamma(t),
 	\label{eq:GSK}
 \end{equation}
referred to as \textbf{generalised sinc kernel}, and where $K_\Gamma(t)=\afourier{\Gamma(\xi)}$ is a positive definite function due to (i) the Bochner theorem and (ii) the fact that $\Gamma(\xi)$ is symmetric and nonnegative.

The convolution in the above equation can be computed analytically only in a few cases, most notably when $K_\Gamma(t)$ is either a cosine or another sinc function, two rather limited scenarios. In the general case, we can exploit the finite-support property of the PSD in eq.~\eqref{eq:gen_sym_rect} and express it as a sum of $N\in\N$ narrower disjoint rectangles of width $\tfrac{\Delta}{N}$ to  define an $N$-th order approximation of $\GSK{t}$ through
\begin{align}
	\GSK{t} & = \sum_{i=1}^N\afourier{  \srp{\xizi}{\frac{\Delta}{N}}{\sigma^2}{\xi} \Gamma(\xi)} \nonumber\\
	&\approx \sum_{i=1}^N\Gamma\left(\xizi\right)\afourier{  \srp{\xizi}{\frac{\Delta}{N}}{\sigma^2}{\xi} } \label{eq:Riemann_sum}\\
	&= \sinc \tfrac{\Delta}{N}t\sum_{i=1}^N \Gamma\left(\xizi\right) \cos\left(2\pi\xizi\right) \eqdef \GSKN{t} \nonumber
\end{align}
where $\xizi = \xi_0 - \Delta\tfrac{N+1-2i}{2N}$, and the approximation in eq.~\eqref{eq:Riemann_sum} follows the assumption that $\Gamma\left(\xi\right)$ can be approximated by $\Gamma\left(\xizi\right)$ within $[\xizi - \tfrac{\Delta}{2N},\xizi + \tfrac{\Delta}{2N}]$ supported by the following remark. 

\begin{remark}
Observe that the expression in eq.~\eqref{eq:Riemann_sum} can be understood as a Riemann sum using the mid-point value. Therefore, convergence of $\GSKN{t}$ to $\GSK{t}$ as $N$ goes to infinite is guaranteed provided that $\Gamma(\cdot)$ is Riemman-integrable, or, equivalently, $\Gamma(\cdot)$ is continuous almost everywhere. This is a sound requirement in our case, since it is a necessity to compute inverse Fourier transforms. 	
\end{remark}


\section{Relationship to Nyquist frequency and perfect reconstruction} 
\label{sec:Nyquist}

The Nyquist–Shannon sampling theorem specifies a sufficient condition for perfect, i.e., zero error, reconstruction of band-limited continuous-time signals using a finite number of samples \cite{Shannon_1949,Nyquist_1928}. Since (i) GPs models are intrinsically related to reconstruction, and (ii) the proposed sinc kernel ensures band-limited trajectories almost surely, we now study the reconstruction property the GP-sinc from a classical signal processing perspective. 

Let us focus on the baseband case ($\xi_0=0$), in which case we obtain the \textbf{centred} sinc kernel given by 
\begin{equation}
 	\SK{t} = \sigma^2\sinc{(\Delta t)}.
 	\label{eq:centred_K}
 \end{equation}
 For a centred GP-sinc, $f(t)\sim\GP(0,\sinc_{\sigma^2,0,\Delta})$, the Nyquist frequency is given by the width of its PSD, that is, $\Delta$. The following Proposition establishes the interpretation of Nyquist perfect reconstruction from the perspective of a vanishing posterior variance for a centred GP-sinc. 

\begin{proposition}
\label{prop:concentration}
The posterior distribution of a GP with centred sinc kernel concentrates on the Whittaker–Shannon interpolation formula \cite{Shannon_1949,Whittaker_1915} with zero variance when the observations are noiseless and uniformly-spaced at the Nyquist frequency \cite{Nyquist_1928}.	
\end{proposition}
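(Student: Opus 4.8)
The plan is to write the Gaussian process posterior for noiseless observations explicitly and then exploit the fact that the centred sinc kernel \emph{diagonalises} once it is sampled at the Nyquist rate. First I would fix the grid: since the centred PSD $\srp{0}{\Delta}{\sigma^2}{\xi}$ is supported on $[-\tfrac{\Delta}{2},\tfrac{\Delta}{2}]$, sampling at the Nyquist frequency $\Delta$ corresponds to the uniform grid $t_n = n/\Delta$, $n\in\mathbb{Z}$. For noiseless data $y_n = f(t_n)$ the posterior at a test input $t$ has the textbook form $\mu(t)=\mathbf{k}(t)^\top K^{-1}\mathbf{y}$ and $v(t)=\SK{0}-\mathbf{k}(t)^\top K^{-1}\mathbf{k}(t)$, where $K$ is the Gram matrix evaluated on the grid and $\mathbf{k}(t)$ stacks the kernel values $\SK{t-t_n}$.

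The crux of the argument is the evaluation of $K$ on this grid. Because $\SK{t_n-t_m}=\sigma^2\sinc(\Delta(n-m)/\Delta)=\sigma^2\sinc(n-m)$ and the normalised sinc satisfies $\sinc(k)=\delta_{k0}$ for every integer $k$, the Gram matrix collapses to $K=\sigma^2 I$. Nyquist-rate sampling thus makes the grid kernel values mutually orthogonal, so $K^{-1}=\sigma^{-2}I$ is immediate and no genuine inversion is required.

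Substituting this into the mean gives $\mu(t)=\sigma^{-2}\sum_n \SK{t-t_n}\,y_n=\sum_n y_n\,\sinc(\Delta t-n)$, which is precisely the Whittaker--Shannon interpolation formula. The same substitution into the variance yields $v(t)=\sigma^2\bigl(1-\sum_n\sinc^2(\Delta t-n)\bigr)$, so it remains only to establish the partition-of-unity identity $\sum_n\sinc^2(x-n)=1$. I would prove this by Poisson summation: the Fourier transform of $\sinc^2$ is the triangle function $\tri{\xi}$, which equals $1$ at the origin and vanishes at every nonzero integer, so the periodisation $\sum_n\sinc^2(x-n)$ has all its Fourier coefficients zero except the constant term, which equals $1$. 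Hence $v(t)\equiv 0$ everywhere.

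The main obstacle is rigour rather than ideas: the grid is infinite, so $K$ is an infinite array and the posterior expressions are infinite sums. I would handle this by observing that the diagonal structure $K=\sigma^2 I$ makes $K^{-1}$ well-defined termwise, sidestepping any difficulty with inverting an infinite operator; the convergence of the interpolation series is the standard band-limited statement, and the variance series even converges to $1$ uniformly. Alternatively one can prove the finite-grid claim and pass to the limit, but the orthogonality keeps the infinite case clean enough to treat directly.
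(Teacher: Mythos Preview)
Your proposal is correct and follows essentially the same route as the paper: diagonalise the Gram matrix via $\sinc(k)=\delta_{k0}$, read off the Whittaker--Shannon formula from the posterior mean, and kill the posterior variance with a Fourier-analytic identity for $\sum_n\sinc^2(\cdot-n)$. The only cosmetic difference is that the paper phrases the variance step as ``the Fourier transform of $\sum_i\sinc^2(\Delta(t-t_i))$ is a single Dirac at zero'' (writing the sum as $\sinc^2\ast\text{Dirac comb}$ and using that only the delta at the origin lands in the support of $\operatorname{tri}$), whereas you invoke Poisson summation directly; these are the same computation. The paper also starts with finitely many observations and passes to the limit, which is precisely the alternative you mention at the end.
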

\begin{proof}
Let us first consider $n\in\N$ observations taken at the Nyquist frequency with times $\t_n=[t_1,\ldots,t_n]$ and values $\y_n=[y_1,\ldots,y_n]$. With this notation, the posterior GP-sinc is given by 
\begin{equation}
 	p(f(t)|\y_n) = \GP(\SK{t,\t_n}^\top \Lambda^{-1}\y_n, \SK{t,t'} - \SK{t,\t_n}^\top \Lambda^{-1}\SK{t',\t_n}),
 	\label{eq:centred_post}
 \end{equation} 
where $\Lambda = \SK{\t_n,\t_n}$ is the covariance of the observations and $\SK{t,\t}$ denotes the vector of covariances with the term $\SK{t,t_i}=\SK{t-t_i}$ in the $i-$th entry. 

A key step in the proof is to note that the covariance matrix $\Lambda$ is diagonal. This is because the difference between any two observations times, $t_i,t_j$, is a multiple of the inverse Nyquist frequency $\Delta^{-1}$, and the sinc kernel vanishes at all those multiples except for $i=j$; see eq.~\eqref{eq:centred_K}. Therefore, replacing the inverse matrix $\Lambda^{-1}=\sigma^{-2}\eye_n$ and the centred sinc kernel in eq.~\eqref{eq:centred_K} into eq.~\eqref{eq:centred_post} allows us to write the posterior mean and variance (choosing $t=t'$ above) respectively as
\begin{align}
	\E{f(t)|\y_n} = \sum_{i=1}^n y_i\sinc(\Delta (t-t_i)), \qquad
	\V{f(t)|\y_n} = \sigma^2 \left(1 - \sum_{i=1}^n \sinc^2(\Delta (t-t_i))\right)
	. \label{eq:post_mean_var}
\end{align}
For the first part of the proof, we can just apply $\lim_{n\rightarrow\infty}$ to the posterior mean and readily identify the Shannon-Whittaker interpolation formula: a convolution between the sinc function and the observations.

To show that the posterior variance vanishes as $n\rightarrow\infty$, we proceed by showing that the Fourier transform of the sum of square sinc functions in eq.~\eqref{eq:post_mean_var} converges to a Dirac delta (at zero) of unit magnitude instead, as these are equivalent statements. Denote by $\tri{\cdot}$ the triangular function and observe that  
\begin{align}
	\fourier{\sum_{i=1}^\infty \sinc^2(\Delta (t-t_i))} &= \fourier{ \sinc^2(\Delta t)}\fourier{ \sum_{i=1}^\infty\delta_{t_i}}  && \text{conv. def. \& thm.}\label{eq:lim_sinc2}\\
	&= \frac{1}{\Delta}{\tri{\frac{\xi}{\Delta}}}\Delta\sum_{i=1}^\infty \delta_{i\Delta}  && \text{Fourier: $\sinc^2(\cdot)$ and $\delta_{(\cdot)}$}\nonumber\\
	&=\delta_0(\xi)\nonumber
\end{align}
where the last line follows from the fact that, out of all the Dirac deltas in the summation, the only one that falls on the support of the triangular function (of width $2\Delta$) is the one at the origin $\delta_0(\xi)$.
\end{proof}

The above result opens perspectives for analysing GPs' reconstruction errors; this is needed in the GP literature. This is because a direct consequence of Proposition \ref{prop:concentration} is a quantification of the number of observations needed to have zero posterior variance (or reconstruction error). This is instrumental to design sparse GPs where the number of inducing variables is chosen with a sound metric in mind: proximity to the Nyquist frequency. Finally, extending the above result to the non-baseband case can be achieved through frequency modulation, the focus of the next section.


\section{Stereo amplitude modulation with GP-sinc}
\label{sec:am}

We can investigate the relationship between trajectories of GPs both for non-centred---eq.~\eqref{eq:sinc_k}---and centred---eq.~\eqref{eq:centred_K}---sinc kernels using a latent factor model. Specifically, let us consider two i.i.d.~GP-sinc processes $x_1, x_2 \sim \GP(0,\sigma^2\sinc(\Delta t))$ with centred sinc kernel and construct the factor model 
\begin{equation}
	x(t)=x_1\cos(2\pi\xi_0 t) + x_2\sin(2\pi\xi_0 t).
	\label{eq:factor_proc}
\end{equation}
Observe that, due to independence and linearity, the process $x$ in eq.~\eqref{eq:factor_proc} is  a GP with zero mean and covariance given by a non-centred sinc kernel\footnote{This follows directly from the identity $\cos(\alpha_1-\alpha_2) = \cos(\alpha_1)\cos(\alpha_2) + \sin(\alpha_1)\sin(\alpha_2)$ choosing $\alpha_i = 2\pi\xi_0t_i$ for $i=1,2$} 
\begin{align}
	K_x(t,t') = \E{x(t)x(t')} = \sigma^2\sinc(\Delta (t - t'))  \cos(2\pi\xi_0 (t-t')) =\SK{t-t'}.
\end{align} 

This result can also be motivated by the following decomposition of the sinc kernel: 
\begin{equation}
 	\SK{t-t'}	= 	\begin{bmatrix}
    					\cos 2\pi \xi_0t \\
    					\sin 2\pi \xi_0t
					\end{bmatrix}^\top
					\begin{bmatrix}
    					\sigma^2\sinc(\Delta (t-t')) & 0 \\
    					0             & \sigma^2\sinc(\Delta (t-t'))
					\end{bmatrix} 
					\begin{bmatrix}
    					\cos 2\pi \xi_0t' \\
    					\sin 2\pi \xi_0t'
					\end{bmatrix}.
 \end{equation} 
The above matrix can be interpreted as the covariance of a multioutput GP \cite{melkumyan2001,Alvarez:2011}, where the two channels $x_1,x_2$ are independent due to the block-diagonal structure. Then, the trajectories of the non-centred sinc kernel can be simulated by sampling the two channels in this MOGP, multiply one of them by a sine and the other one by a cosine, to finally sum them together. 

The outlined relationship between centred and non-centred sinc trajectories is of particular interest in stereo modulation/demodulation \cite{oppenheim} applications from a Bayesian nonparametric perspective. This is because we can identify the two independent draws from the centred sinc kernel as lower frequency signals containing \emph{information} (such as stereo audio, bivariate sensors, or two-subject sensors) and the deterministic higher frequency sine and cosine signals as a \emph{carrier}. In this setting, since the paths of a GP-sinc are equal (in probability) to those of the factor model presented in eq.~\eqref{eq:factor_proc}, we can consider the GP-sinc as a generative model for stereo amplitude modulation. 

Recall that the very objective in stereo demodulation is to recover the latent {information} signals, henceforth referred to as \emph{channels}, at the receiver's end from (possibly corrupted) observations. In this regard, the sinc kernel represents a unique contribution, since Bayesian signal recovery under noisy/missing observations is naturally handled by GP models. In simple terms, for a stereo modulated signal with carrier frequency $\xi_0$ and bandwidth $\Delta$, the posterior over channels $\{x_i\}_{i=1,2}$ wrt an observation $\x$ (of the modulated signal) is jointly Gaussian and given by 
\begin{equation}
	p(x_i(t)|\x) = \GP(K_{x_i,\x}^\top(t)\Lambda^{-1}\x, K_{x_i}(t-t')- K_{x_i,\x}^\top(t)\Lambda^{-1}K_{\x,x_i}(t')),
\end{equation}
where $\Lambda = \SK{\t,\t}+\eye\sigma_\text{noise}^2$ is the covariance of the observations, $K_{x_i}(t-t')$ is the prior covariance of channel $x_i(t)$, and $K_{x_i,\x}(t)$ is the covariance between observations $\x$ and channel $x_i(t)$ given by 
\begin{equation}
	K_{x_i,\x}(t) = \E{x_i(t)x(\t)} = \sigma^2\sinc(\delta(t-\t))\cos(2\pi\xi_0\t),
\end{equation}
where we have used the same notation as eq.~\eqref{eq:centred_post}.

Figure \ref{fig:am_recovery} shows an illustrative implementation of GP-sinc demodulation, where the associated channels were recovered form non-uniform observations of a sinc-GP trajectory.

\begin{figure}[ht] 
	\centering
	\includegraphics[width=0.9\textwidth]{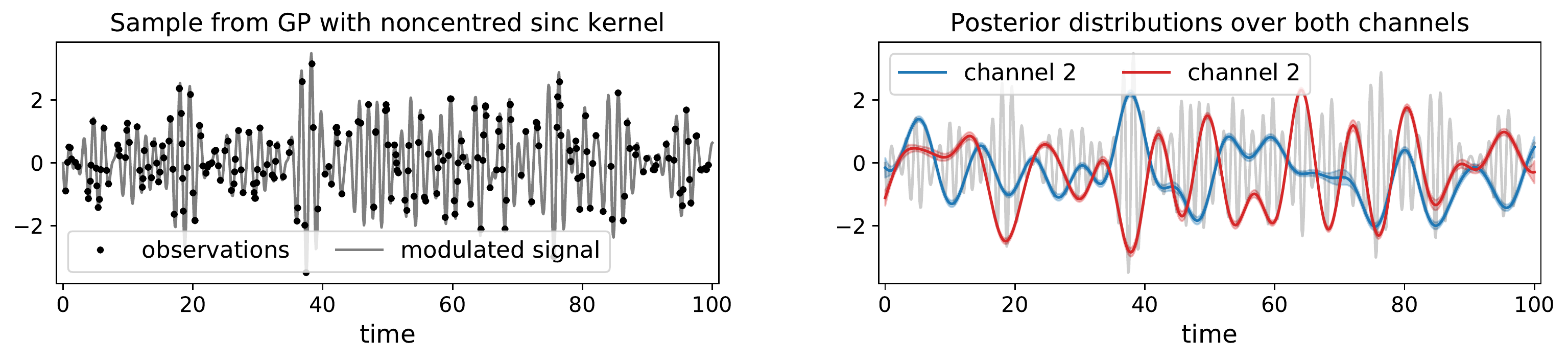}
	\caption{Demodulation using the sinc kernel. \textbf{Left:} A draw from a GP with noncentred sinc kernel (information ``times'' carrier). \textbf{Right:} Posterior of the stereo channels with latent modulated signal in light grey.} 
	\label{fig:am_recovery}
\end{figure}

 
\section{Bayesian band-pass filtering with GP-sinc}
\label{sec:bpf}

In signal processing, the extraction of a frequency-specific part of a signal is referred to as \emph{band-pass filtering} \cite{haykin08}; accordingly, \emph{low-pass} and \emph{high-pass} filtering refer to extracting the low (centred around zero) and high frequency components respectively. We next show that the sinc kernel in eq.~\eqref{eq:sinc_k} has appealing features to address band-pass filtering from a Bayesian standpoint, that is, to find the posterior distribution of a frequency-specific component conditional to noisy and missing observations. For the low-pass filtering problem, see the similar approach in \cite{tobar19}.

We formulate the filtering setting as follows. Let us consider a signal given by the mixture
\begin{equation}
 	x(t) = x_{\text{band}}(t) + x_{\text{else}}(t),
 \end{equation} 
 where $x_{\text{band}}$ and $x_{\text{else}}$ correspond to independent GPs only containing energy at frequencies inside and outside the band of interest respectively. Then, we can denote the PSDs of $x(t)$ by $S(\xi)$ and those of the components by $S_{\text{band}}(\xi)$ and $S_{\text{else}}(\xi)$ respectively. Therefore, our assumptions of independence of the components $x_{\text{band}}(t)$ and $ x_{\text{else}}(t)$ results on $S(\xi) = S_{\text{band}}(\xi) + S_{\text{else}}(\xi)$, where $S_{\text{band}}(\xi)$ and $S_{\text{else}}(\xi)$ have non-overlapping, or disjoint, support. An intuitive presentation of these PSDs is shown in Figure \ref{fig:Sband}. 

 \pgfmathdeclarefunction{gauss}{2}{\pgfmathparse{1/(#2*sqrt(2*pi))*exp(-((x-#1)^2)/(2*#2^2))
}%
} 
 \pgfmathdeclarefunction{gauss3}{2}{\pgfmathparse{
 gauss(#1-1.5,#2/2) + 1.5*gauss(#1,#2) + gauss(#1+1.5,#2/2)
}%
}  
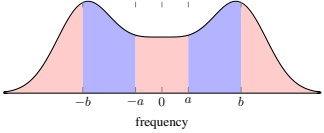
\begin{figure}[ht]
\centering 
\begin{minipage}[c]{0.3\textwidth}
\begin{tikzpicture}[thick, scale=0.5]
\begin{axis}[no markers, domain=-4:4, samples=100,
axis line style={draw opacity=0},
xlabel=frequency,
height=4cm, width=10cm, 
xtick={-1.5, -0.5, 0, 0.5, 1.5}, xticklabels={$-b$, $-a$, $0$, $a$, $b$}, ytick=\empty,
enlargelimits=false, clip=false, axis on top]
\addplot [fill=none, draw=black, style={ultra thick}, domain=-3:3] {gauss3(0,1)} \closedcycle;
\addplot [fill=red!20, draw=none, domain=-3:-1.5] {gauss3(0,1)} \closedcycle;
\addplot [fill=blue!30, draw=none, domain=-1.5:-0.5] {gauss3(0,1)} \closedcycle;
\addplot [fill=red!20, draw=none, domain=-0.5:0.5] {gauss3(0,1)} \closedcycle;	
\addplot [fill=blue!30, draw=none, domain=0.5:1.5] {gauss3(0,1)} \closedcycle;
\addplot [fill=red!20, draw=none, domain=1.5:3] {gauss3(0,1)} \closedcycle;
\end{axis}
\end{tikzpicture}
\end{minipage}
\hfill
\begin{minipage}[c]{0.65\textwidth}
\caption{Illustration of PSDs in the band-pass filtering setting: The area inside the black line is the PSD of the process $x$, whereas the regions in blue and red denote the PSDs of the band component $x_{\text{band}}$  ($S_{\text{band}}$) and frequencies outside the band $x_{\text{else}}$  ($S_{\text{else}}$) respectively. Choosing $a=0$ recovers the low-pass setting.}
\label{fig:Sband}
\end{minipage}
\end{figure}
Notice that the above framework is sufficiently general in the sense we only require that there is a part of the signal on which we are interested, namely $x_{\text{band}}(t)$, and  \emph{the rest}. Critically, we have not imposed any requirements on the kernel of the complete signal $x$. Due to the joint Gaussianity of $x$ and $x_{\text{band}}$, the Bayesian estimate of the band-pass filtering problem, conditional to a set of observations $\x$, is given by a GP posterior distribution, the statistics of which will be given by the covariances of $x$ and $x_{\text{band}}$. Since $S_{\text{band}}$ can be expressed as the PSD of $x$ times the symmetric rectangle introduced in eq.~\eqref{eq:simrect}, we can observe that the covariance of $x_{\text{band}}$ is given by the generalised sinc kernel presented in eq.~\eqref{eq:GSK} and, therefore, it can be computed via the inverse Fourier transform:
\begin{equation}
	K_{\text{band}}(t) 	= \afourier{S_{\text{band}}(\xi)}
						= \afourier{S(\xi) \sr{a}{b}{\xi} }
						= K(t) \star \SK{t},\label{eq:K_band_filter}
\end{equation}
where $K(t)$ denotes the covariance kernel of $x$. Recall that this expression can be computed relying on the Riemann-sum approximation for the convolution presented in Sec.~\ref{sec:freq_var}. Then, the marginal covariance of $x_{\text{band}}$ can be computed from the assumption of independence\footnote{We can extend this model and assume that $x_{\text{band}}$ and $x_{\text{else}}$ are correlated, this is direct from the MOGP literature that designs covariance functions between GPs.}%
\begin{equation}
	\V{x(t), x_{\text{band}}(t')} 	=  \E{x_{\text{band}}(t)x_{\text{band}}(t') } +  \cancelto{0}{\E{x_{\text{else}}(t)x_{\text{band}}(t') }} = K_{\text{band}}(t-t').\label{eq:K_filter}
\end{equation}

In realistic filtering scenarios we only have access to noisy observations $\y=[y_1,\ldots,y_n]$ at times  $\t=[t_1,\ldots,t_n]$. Assuming a white and Gaussian observation noise with variance  $\sigma_\text{noise}^2$ and independent  from $x$, the posterior of $x_{\text{band}}$ is given by 
\begin{equation}
	p(x_{\text{band}}(t)|\y) 
		= \GP(K_{\text{band}}(t-\t)\Lambda^{-1}\y, K_{\text{band}}(t-t') - K_{\text{band}}(t-\t)^\top \Lambda^{-1}K_{\text{band}}(t'-\t)),
\end{equation}
where $\Lambda=K(\t,\t) + \sigma_\text{noise}^2\eye$ is the covariance of the observations and recall that $K_{\text{band}}(t) = K(t) \star \SK{t}$ from eq.~\eqref{eq:K_band_filter}

To conclude this section, notice that the proposed sinc-kernel-based Bayesian approach to band-pass filtering is consistent with the classical practice. In fact, if no statistical knowledge of the process were available for $x$, we can simply assume that the process is uncorrelated and the observations are noiseless. This is equivalent to setting $K(t)=\delta_0(t)$, $\Lambda=\eye$, and $K_{\text{band}}(t) = \SK{t}$, therefore, we recover the ``brick-wall''  \cite{oppenheim} filter:
\begin{equation}
	\hat{x}_\text{band}(t) = \sum_{i=1}^n \sinc{\Delta(t-t_i)}\cos{2\pi\xi_0(t-t_i)}y_i.
\end{equation}



\section{Experiments}
\label{sec:exp}

We validated the ability of the proposed sinc kernel to address, in probabilistic terms, the problems of (i) band-limited reconstruction, (ii) demodulation and (iii) band-pass filtering using real-world data. All examples included unevenly-sampled observations.

\subsection{Reconstruction of a band-limited audio signal} 

We considered an audio recording from the TIMIT repository \cite{timit}. The signal, originally sampled at 16kHz, was low-pass filtered using a brick-wall filter at 750Hz. We used 200 (out of 1000) observations with added Gaussian noise of standard deviation equal to a 10\% of that of the audio signal. Fig.~\ref{fig:exp:recon} shows the PSDs of the true and GP-sinc reconstructed signals (mean and sample trajectories), where it can be seen that the proposed reconstruction follows faithfully the spectral content of the original signal, i.e., it does not introduce unwanted frequency components.

\begin{figure}[H] 
	\centering
	\includegraphics[width=0.8\linewidth]{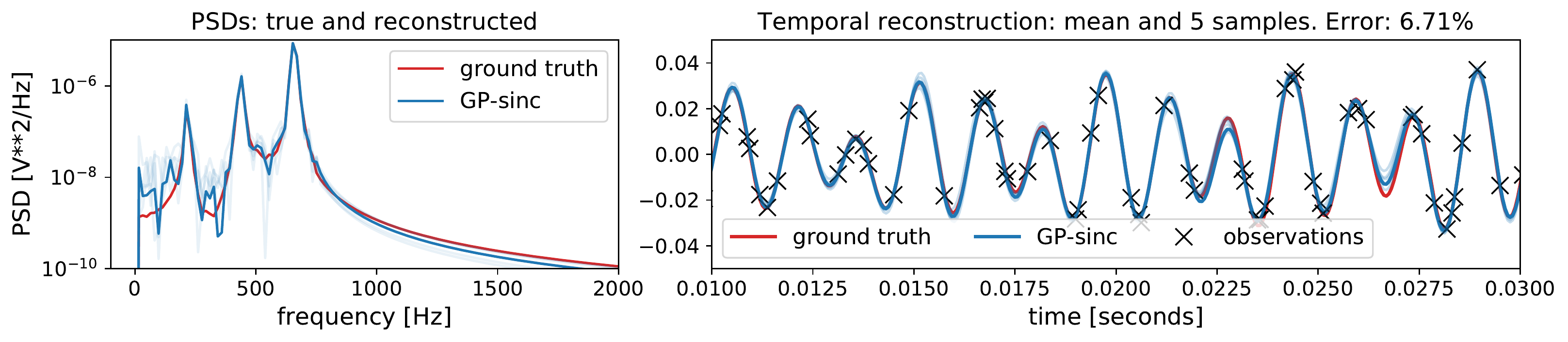}
	\caption{Band-limited reconstruction using GP-sinc: PSDs (left) and temporal reconstruction (right) }
	\label{fig:exp:recon}
\end{figure}

For comparison, we also implemented the GP with spectral mixture kernel (GP-SM) and a cubic spline to reconstruct the band-limited audio signal. Fig.~\ref{fig:exp:recon2} shows the PSDs of the complete signal in red and those of the reconstructions in blue for GP-SM (left) and the cubic spline (right). Notice how the proposed GP-sinc (Fig.~\ref{fig:exp:recon}, left) outperformed GP-SM and the spline due to its rectangular PSD, which allows frequencies with high and zero energies to be arbitrarily close, unlike that of GP-SM that does not allow for a PSD with sharp decay.

\begin{figure}[H] 
	\centering
	\includegraphics[trim={0 0 19.3cm 0},clip,width=0.33\textwidth]{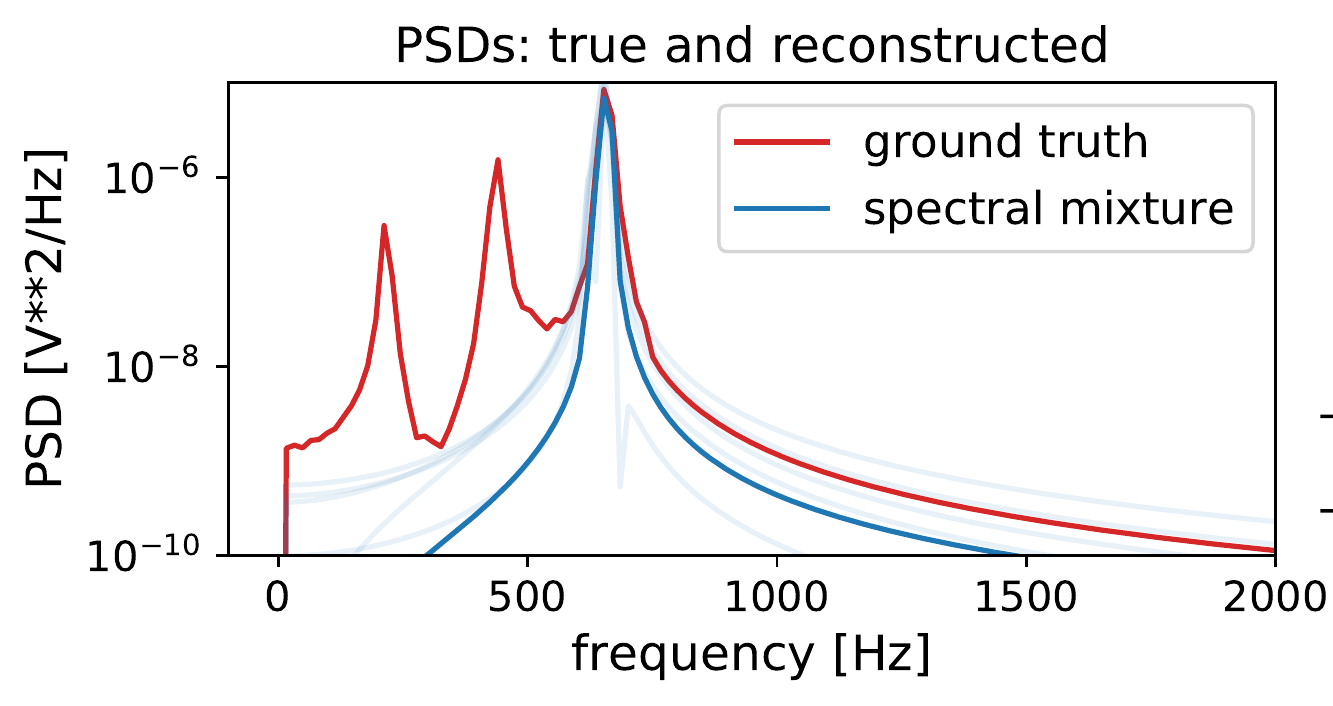}
	\includegraphics[trim={0 0 19.3cm 0},clip,width=0.33\textwidth]{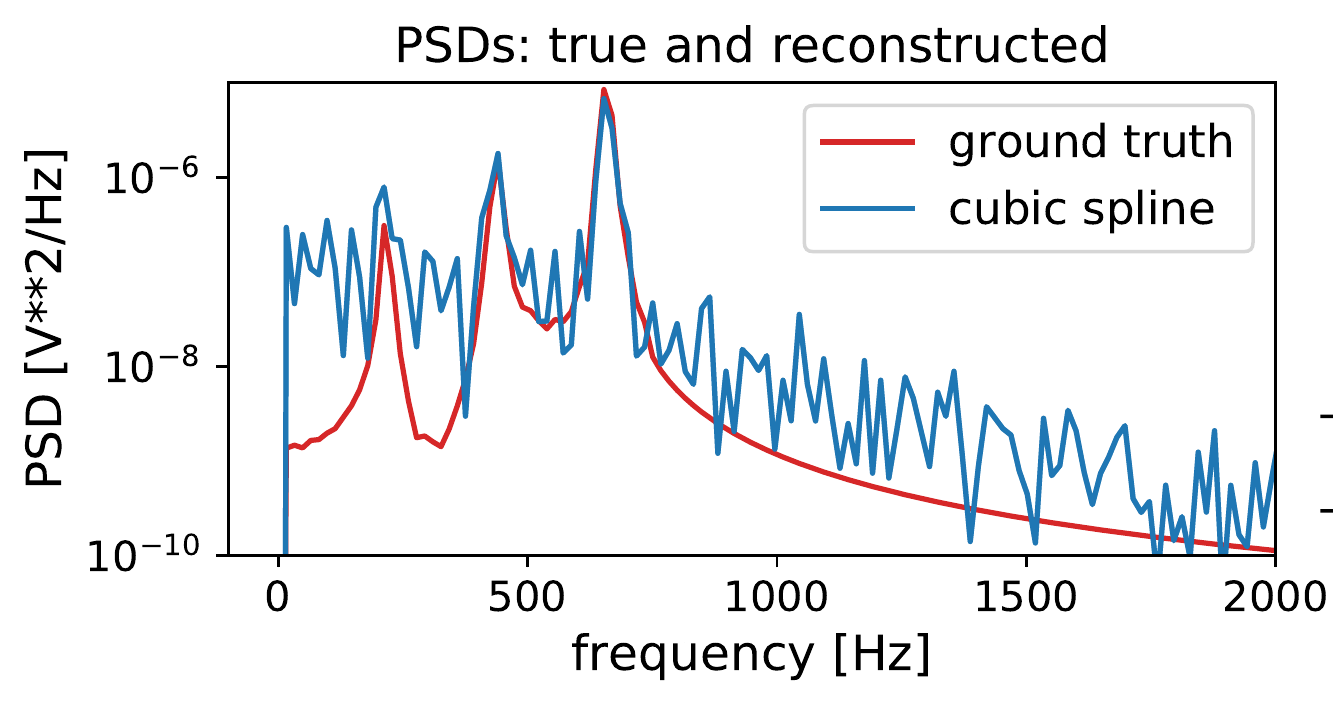}
	\caption{GP-SM and cubic spline in the reconstruction of a band-limited audio signal. Ground truth PSD shown in red and reconstructions in blue. }
	\label{fig:exp:recon2}
\end{figure}

\subsection{Demodulation of two heart-rate signals} 

We considered two heart-rate signals from the MIT-BIH Database \cite{Goldberger}, upsampled from 2Hz to 10Hz, corresponding to two different subjects and thus can be understood as being statistically independent. We then composed a stereo modulated signal using carrier of frequency 2Hz (most of the power of the heart-rate signals is contained below 1Hz), and used a subset of 1200 (out of 9000) observations samples with added noise of standard deviation equal to a 20\% of that of the modulated signal. Fig.~\ref{fig:exp:am} shows the 35-run 10-90 percentiles for the reconstruction error for both channels versus the average sampling frequency (recall that these are unevenly-sampled series), and the temporal reconstruction for sampling frequency equal to 0.167. Notice how the reconstruction of the channels reaches a plateau for frequencies greater than 0.06, suggesting that oversampling does not improve performance as suggested by Proposition \ref{prop:concentration}. The discrepancy in reconstruction error stems from the richer spectrum of channel 1.
\begin{figure}[H] 
	\centering
	\includegraphics[width=0.8\linewidth]{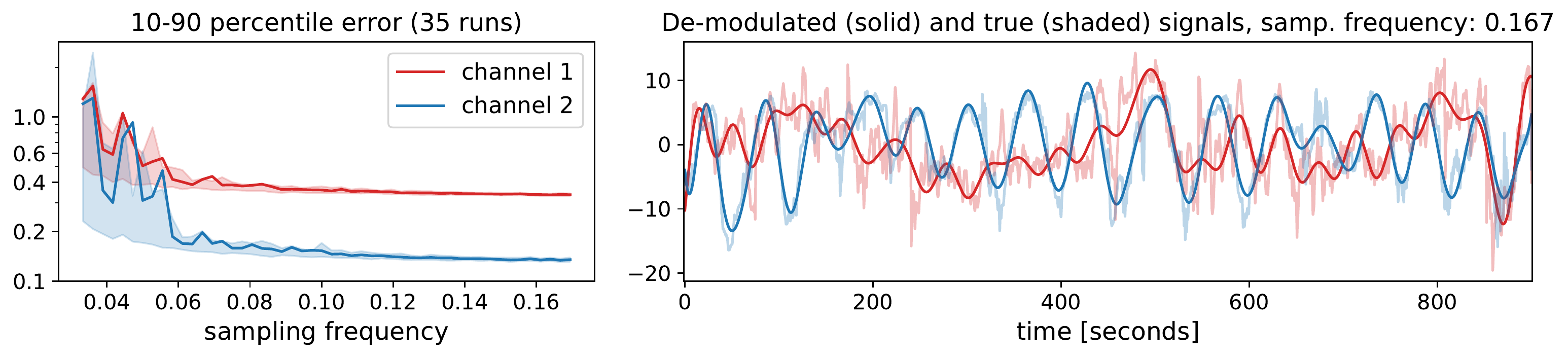}
	\caption{Heart-rate demodulation using GP-sinc: error (left) and reconstruction (right). }
	\label{fig:exp:am}
\end{figure}

\subsection{Band-pass filtering of CO$_2$ concentration} 

We implemented the GP-sinc for extracting the 1-year periodicity component of the well-known Mauna-Loa monthly CO$_2$ concentration series. We used 200 (out of 727) observations, that is, an average sampling rate of $0.275[\text{month}^{-1}] \approx 3.3[\text{year}^{-1}]$ which is above the Nyquist frequency for the desired component. Fig.~\ref{fig:exp:bpf} shows both the unfiltered and the GP-sinc filtered PSDs (left), and the latent signal, observation and band-pass version using GP-sinc with $\xi_0=[\text{year}^{-1}]$ and $\Delta=0.1$. Notice that, as desired, the GP-sinc band-pass filter was able to recover the yearly component from non-uniformly acquired observations.
\begin{figure}[H] 
	\centering
	\includegraphics[width=0.8\linewidth]{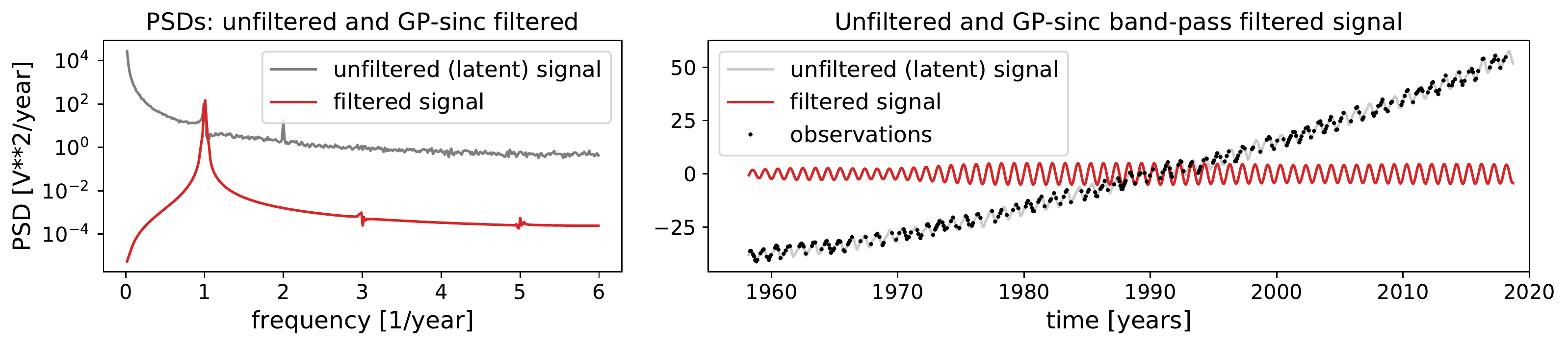}
	\caption{Bandpass filtering of Mauna-Loa monthly CO$_2$ concentration using GP-sinc.}
	\label{fig:exp:bpf}
\end{figure}

\subsection{Generalised sinc kernel and Nyquist-based sparse implementation}

Lastly, we implemented the generalised Sinc kernel (GSK) in eq.~\eqref{eq:GSK}, i.e., a Sinc mixture, using a sparse approximation where inducing locations are chosen according to the Nyquist frequency---see Sec.~\ref{sec:Nyquist}. We trained a GP with the GSK kernel, using the heart rate signal from the MIT-BIH database where we simulated regions of missing data. Fig.~\ref{fig:GSK_sparse} shows the PSD at the left (components in colours and GSK in red), the resulting sum-of-sincs kernel at the centre, and the time series (ground truth, observations, and reconstruction) at the right. Notice from the right plot that though $N=600$ observations were considered (black dots), only $M=54$ inducing locations (blue crosses) were needed since they are chosen based on the extension of the support of the (trained) PSD (Sec.~\ref{sec:Nyquist}).

\begin{figure}[H] 
	\centering	
	\includegraphics[width=0.8\textwidth]{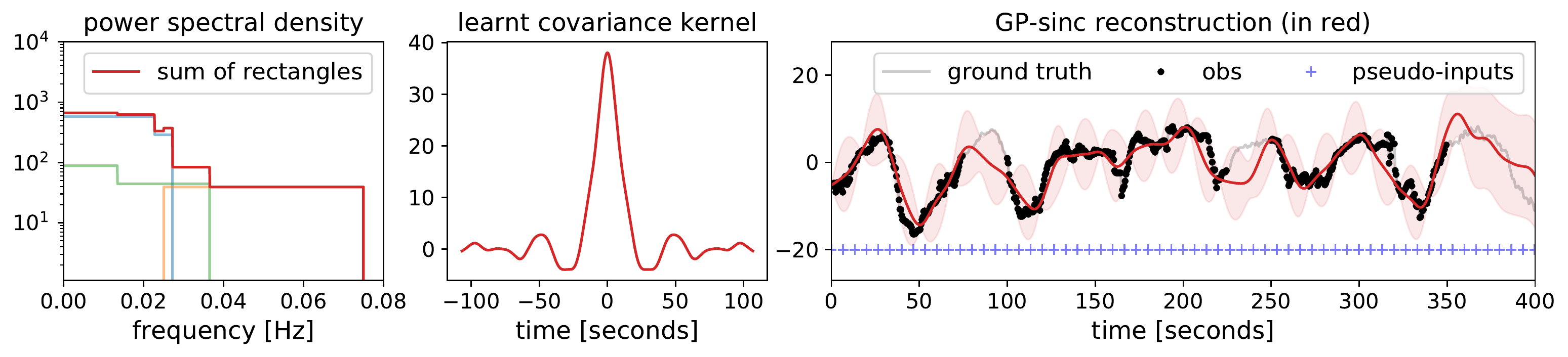}
	\caption{Implementation of generalised sinc kernel (sum of sincs) and Nyquist-based sparse approximation using a heart-rate signal. From left to right: PSDs (components in colour and sum in red), resulting GSK kernel and heart-rate signal.}
	\label{fig:GSK_sparse}
\end{figure}


\section{Discussion}
\label{sec:disc}

We have proposed a novel stationary covariance kernel for Gaussian processes (GP), named the sinc kernel, that ensures trajectories with band-limited spectrum. This has been achieved by parametrising the GP's power spectral density as a rectangular function, and then applying the inverse Fourier transform. In addition to its use on GP training and prediction, the properties of the proposed kernel have been illuminated in the light of the classical spectral representation framework. This allowed us to interpret the role of the sinc kernel on infinite mixtures of sinusoids, Nyquist reconstruction, stereo amplitude modulation and band-pass filtering. From theoretical, illustrative and experimental standpoints, we have validated both the novelty of the proposed approach as well as its consistency with the mature literature in spectral estimation. Future research lines include exploiting the features of the sinc kernel for sparse interdomain GP approximations \cite{NIPS2009_interdomain} and spectral estimation \cite{tobar18}, better understand the error reconstruction rates for the general kernels following the results of Section \ref{sec:Nyquist}, and comparison to other kernels via a mixture of sinc kernels as suggested in Section \ref{sec:freq_var}. 
\subsubsection*{Acknowledgments}
This work was funded by the projects Conicyt-PIA \#AFB170001 Center for Mathematical Modeling and Fondecyt-Iniciación \#11171165.
\newpage

{
\bibliographystyle{plain}
\bibliography{chapters/references}

\begin{thebibliography}{10}

\bibitem{Alvarez:2011}
M.~A. \'{A}lvarez, L.~Rosasco, and N.~D. Lawrence.
\newblock Kernels for vector-valued functions: A review.
\newblock {\em Found. Trends Mach. Learn.}, 4(3):195--266, March 2012.

\bibitem{salomon}
S.~Bochner, M.~Tenenbaum, and H.~Pollard.
\newblock {\em Lectures on Fourier Integrals}.
\newblock Princeton University Press, 1959.

\bibitem{8307269}
R.~Boloix-Tortosa, J.~J. Murillo-Fuentes, F.~J. Payán-Somet, and
  F.~Pérez-Cruz.
\newblock Complex {G}aussian processes for regression.
\newblock {\em IEEE Transactions on Neural Networks and Learning Systems},
  29(11):5499--5511, 2018.

\bibitem{bretthorst2013bayesian}
G.~L. Bretthorst.
\newblock {\em Bayesian Spectrum Analysis and Parameter Estimation}.
\newblock Lecture Notes in Statistics. Springer, 1988.

\bibitem{gal2015improving}
Y.~Gal and R.~Turner.
\newblock Improving the {G}aussian process sparse spectrum approximation by
  representing uncertainty in frequency inputs.
\newblock In {\em Proc. of ICML}, pages 655--664, 2015.

\bibitem{timit}
J.~S. Garofolo, L.~F. Lamel, W.~M. Fisher, J.~G. Fiscus, and D.~S. Pallett.
\newblock {DARPA TIMIT acoustic-phonetic continous speech corpus CD-ROM. NIST
  speech disc 1-1.1}.
\newblock {\em NASA STI/Recon technical report n}, 93, 1993.

\bibitem{Goldberger}
A.~L. Goldberger and D.~R. Rigney.
\newblock {\em Theory of Heart: Biomechanics, Biophysics, and Nonlinear
  Dynamics of Cardiac Function}, chapter Nonlinear dynamics at the bedside,
  pages 583--605.
\newblock Springer-Verlag, 1991.

\bibitem{gregory_revolution}
P.~C. Gregory.
\newblock A {B}ayesian revolution in spectral analysis.
\newblock {\em AIP Conference Proceedings}, 568(1):557--568, 2001.

\bibitem{haykin08}
S.~S. Haykin.
\newblock {\em Adaptive filter theory}.
\newblock Pearson Education India, 2008.

\bibitem{hensman2016variational}
J.~Hensman, N.~Durrande, and A.~Solin.
\newblock Variational fourier features for {G}aussian processes.
\newblock {\em Journal of Machine Learning Research}, 18(151):1--52, 2018.

\bibitem{jaynes1987bayesian}
E.~T. Jaynes.
\newblock Bayesian spectrum and chirp analysis.
\newblock In {\em Maximum-Entropy and Bayesian Spectral Analysis and Estimation
  Problems}, pages 1--37. Springer, 1987.

\bibitem{jaynes2003}
E.~T. Jaynes.
\newblock {\em Probability Theory: The Logic of Science.}
\newblock Cambrdige University Press, 2003.

\bibitem{kay:88}
S.~Kay.
\newblock {\em Modern Spectral Estimation: {T}heory and Application}.
\newblock Prentice Hall, 1988.

\bibitem{NIPS2009_interdomain}
M.~L\'{a}zaro-Gredilla and A.~Figueiras-Vidal.
\newblock Inter-domain gaussian processes for sparse inference using inducing
  features.
\newblock In {\em Advances in Neural Information Processing Systems 22}, pages
  1087--1095. Curran Associates, Inc., 2009.

\bibitem{lazaro2010sparse}
M.~L\'{a}zaro-Gredilla, J.~Qui\~{n}onero Candela, C.~E. Rasmussen, and A.~R.
  Figueiras-Vidal.
\newblock Sparse spectrum {G}aussian process regression.
\newblock {\em Journal of Machine Learning Research}, 11(Jun):1865--1881, 2010.

\bibitem{melkumyan2001}
A.~Melkumyan and F.~Ramos.
\newblock Multi-kernel {Gaussian} processes.
\newblock In {\em Proc. of IJCAI}, pages 1408--1413. AAAI Press, 2011.

\bibitem{Nyquist_1928}
H.~{Nyquist}.
\newblock Certain topics in telegraph transmission theory.
\newblock {\em Transactions of the American Institute of Electrical Engineers},
  47(2):617--644, 1928.

\bibitem{oppenheim}
A.~V. Oppenheim, Alan~S. Willsky, and S.~Hamid.
\newblock {\em Signals and Systems}.
\newblock Pearson, 1996.

\bibitem{parra_tobar}
G.~Parra and F.~Tobar.
\newblock Spectral mixture kernels for multi-output {G}aussian processes.
\newblock In {\em Advances in Neural Information Processing Systems 30}, pages
  6681--6690. Curran Associates, Inc., 2017.

\bibitem{powell1964efficient}
M.~J.~D. Powell.
\newblock An efficient method for finding the minimum of a function of several
  variables without calculating derivatives.
\newblock {\em The Computer Journal}, 7(2):155--162, 1964.

\bibitem{Rasmussen:2006}
C.~Rasmussen and C.~Williams.
\newblock {\em {Gaussian} Processes for Machine Learning}.
\newblock The MIT Press, 2006.

\bibitem{NIPS2017_7050}
S.~Remes, M.~Heinonen, and S.~Kaski.
\newblock Non-stationary spectral kernels.
\newblock In {\em Advances in Neural Information Processing Systems 30}, pages
  4642--4651. Curran Associates, Inc., 2017.

\bibitem{Shannon_1949}
C.~E. {Shannon}.
\newblock Communication in the presence of noise.
\newblock {\em Proceedings of the Institute of Radio Engineers}, 37(1):10--21,
  1949.

\bibitem{stoica2005spectral}
P.~Stoica and R.~L. Moses.
\newblock {\em Spectral analysis of signals}.
\newblock Pearson Prentice Hall Upper Saddle River, NJ, 2005.

\bibitem{tobar18}
F.~Tobar.
\newblock Bayesian nonparametric spectral estimation.
\newblock In {\em Advances in Neural Information Processing Systems 31}, pages
  10148--10158, 2018.

\bibitem{npr_15b}
F.~Tobar, T.~Bui, and R.~Turner.
\newblock Design of covariance functions using inter-domain inducing variables.
\newblock In {\em NIPS 2015 - Time Series Workshop}, 2015.

\bibitem{tobar:nonparametric}
F.~Tobar, T.~Bui, and R.~Turner.
\newblock Learning stationary time series using {G}aussian processes with
  nonparametric kernels.
\newblock In {\em Advances in Neural Information Processing Systems 28}, pages
  3501--3509. Curran Associates, Inc., 2015.

\bibitem{icassp15}
F.~Tobar and R.~Turner.
\newblock Modelling of complex signals using {G}aussian processes.
\newblock In {\em Proc. of IEEE ICASSP}, pages 2209--2213, 2015.

\bibitem{CSM}
K.~R. Ulrich, D.~E. Carlson, K.~Dzirasa, and L.~Carin.
\newblock {GP} kernels for cross-spectrum analysis.
\newblock In {\em Advances in Neural Information Processing Systems 28}, pages
  1999--2007. Curran Associates, Inc., 2015.

\bibitem{tobar19}
C.~Valenzuela and F.~Tobar.
\newblock Low-pass filtering as bayesian inference.
\newblock In {\em Proc. of IEEE ICASSP}, pages 3367--3371, 2019.

\bibitem{Whittaker_1915}
E.~T. Whittaker.
\newblock On the functions which are represented by the expansions of the
  interpolation-theory.
\newblock {\em Proceedings of the Royal Society of Edinburgh}, 35:181–194,
  1915.

\bibitem{Wilson:2013}
A.~G. Wilson and R.~P. Adams.
\newblock Gaussian process kernels for pattern discovery and extrapolation.
\newblock In {\em Proc. of ICML}, pages 1067--1075, 2013.

\bibitem{wright1999numerical}
S.~J. Wright and J.~Nocedal.
\newblock Numerical optimization.
\newblock {\em Springer Science}, 35(67-68):7, 1999.

\end{thebibliography}
}

\end{document}